\def\eqref#1{equation~\ref{#1}}
\def\1{\bm{1}}
\DeclareMathAlphabet{\mathsfit}{\encodingdefault}{\sfdefault}{m}{sl}
\SetMathAlphabet{\mathsfit}{bold}{\encodingdefault}{\sfdefault}{bx}{n}
\newtheorem{proposition}{Proposition}
\title{HiBBO: HiPPO-based Space Consistency for High-dimensional Bayesian Optimisation}
\author{Junyu Xuan \\
University of Technology Sydney\\
\texttt{junyu.xuan@uts.edu.au} \\
\And
Wenlong Chen \\
Imperial College London\\
\texttt{wenlong.chen21@imperial.ac.uk} \\
\AND
Yingzhen Li \\
Imperial College London\\
\texttt{yingzhen.li@imperial.ac.uk} \\
}
\begin{document}

\maketitle

\begin{abstract}
Bayesian Optimisation (BO) is a powerful tool for optimising expensive black-box functions, but its effectiveness diminishes in high-dimensional spaces due to sparse data and poor surrogate model scalability. While Variational Autoencoder (VAE)-based approaches address this by learning low-dimensional latent representations, the reconstruction-based objective function often brings the functional distribution mismatch between the latent space and original space, leading to suboptimal optimisation performance. In this paper, we first analyse the reason why reconstruction-only loss may lead to distribution mismatch and then propose HiBBO, a novel BO framework that introduces the space consistency into the latent space construction in VAE using HiPPO—a method for long-term sequence modelling—to reduce the functional distribution mismatch between the latent space and original space. Experiments on high-dimensional benchmark tasks demonstrate that HiBBO outperforms existing VAE-BO methods in convergence speed and solution quality. Our work bridges the gap between high-dimensional sequence representation learning and efficient Bayesian Optimisation, enabling broader applications in neural architecture search, materials science, and beyond.
\end{abstract}

\section{Introduction}

Bayesian Optimisation (BO) \citep{brown2024sample,hvarfner2024general} is a powerful sequential strategy for optimising expensive-to-evaluate black-box functions. By leveraging probabilistic surrogate models (typical Gaussian processes (GP) \citep{williams1995gaussian}, Bayesian neural networks \citep{li2024study}, or variational Bayesian last layers (VBLLs) \citep{brunzemabayesian}) and acquisition functions to balance exploration and exploitation, BO efficiently guides the search for global optima with minimal function evaluations. This approach is particularly crucial in scenarios where each evaluation is costly or time-consuming, such as hyperparameter tuning in machine learning \citep{wang2024pre}, experimental design in materials science \citep{frazier2015bayesian}, LLMs in-context optimisation \citep{agarwalsearching}, or controller optimisation in robotics \citep{yuan2019bayesian}. Its data-efficient nature and ability to handle noisy, non-convex objectives have made BO a cornerstone in automated decision-making systems, enabling optimisation in complex, real-world problems where gradient-based methods fail or are infeasible.

While Bayesian optimisation excels in low-dimensional spaces, its performance degrades sharply as dimensionality increases \citep{ziomek2023random,moriconi2020high,leelatent}. This ``curse of dimensionality'' arises because: (1) the surrogate model's accuracy diminishes exponentially with more dimensions, as data becomes sparse relative to the search space volume; (2) acquisition functions struggle to balance exploration-exploitation in high-dimensional manifolds, often converging to suboptimal solutions; and (3) computational costs of inference and optimisation scale poorly. These limitations hinder applications in modern problems like neural architecture search \citep{kandasamy2018neural} or chemical compound design \citep{tripp2020sample}, where data spaces often exceed hundreds or thousands of dimensions.

\begin{figure}[!t]
    \centering
    \includegraphics[width=0.9\linewidth]{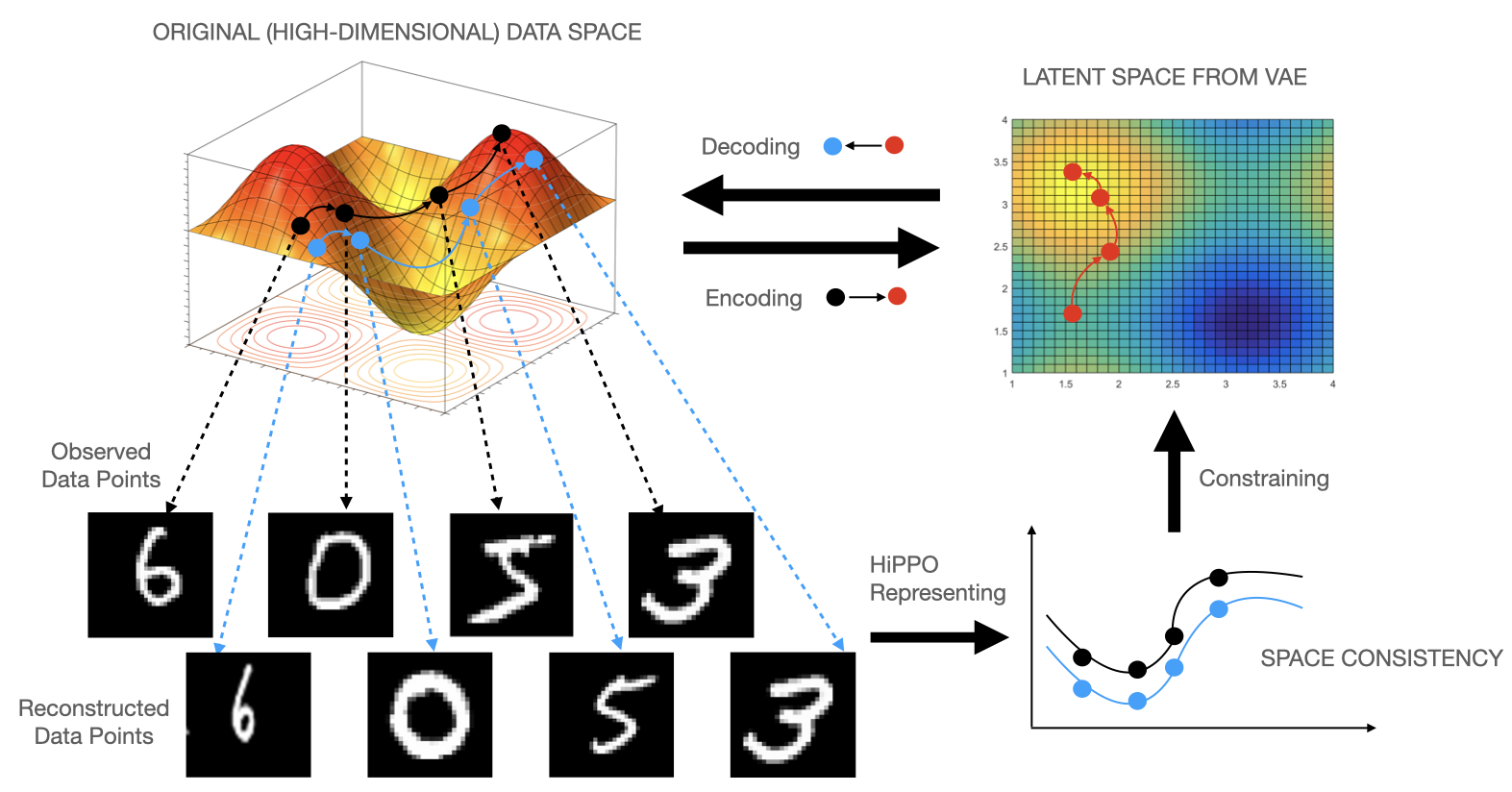}
    \caption{The illustration of our idea that is to increase the space consistency between the latent and original spaces by reducing the HiPPO representations of reconstructed data sequence and original data sequence.}
    \label{fig:illustration}
\end{figure}

Variational Autoencoder (VAE) \citep{kingma2022autoencodingvariationalbayes}-based approaches have emerged as a promising solution to high-dimensional BO by learning compact, low-dimensional latent representations of the input space \citep{tripp2020sample,ramchandranhigh}. These methods first train a VAE to encode high-dimensional data (e.g., molecular structures or neural architectures) into a continuous latent space, where traditional BO is then applied. By exploiting the VAE's ability to capture meaningful structure and constraints in the data, this approach effectively reduces the optimisation problem's dimensionality while preserving critical features. The quality of BO solutions heavily depends on the VAE's representational capacity, which may discard important information during dimension reduction. To preserve the critical features, a number of strategies were proposed, such as metric learning to enhance the discriminative capability of latent representations \citep{grosnit2021high}, GP prior to model the relationship between data features \citep{ramchandranhigh}, sample reweighting to prioritise the space surrounding the data samples with extreme values \citep{tripp2020sample}, and so on. However, one key challenge remains that there is a mismatch between functional distributions (GPs) in the latent space and the original data space. Such a mismatch would make the latent GP not a sufficiently good surrogate model for the following optimisation and the next evaluation points selection. 

In this paper, we first analyse the underlying reason for the distribution mismatch that the commonly used reconstruction loss can only preserve the mean but not the kernel distance/relationship between data points. Then, we propose a novel strategy to reduce the distribution mismatch by preserving the kernel distance/relationship during the latent space construction, which is able to further improve the BO performance. As illustrated in Fig. \ref{fig:illustration}, our idea is to use HiPPO (High-order Polynomial Projection Operators) representation \citep{gu2020hippo,gu2023train} to memorise the past (from the beginning to the latest) observations. The restriction on the HiPPO representations from both latent and original spaces could indirectly preserve the kernel distance/relationship between data points. Such a HiPPO representation constraint could enhance the consistency between the constructed latent space with the original data space.  

Our contributions can be summarised as follows:
\begin{itemize}
\item We introduce the HiPPO-based method to preserve the kernel distance/relationship of data points for the latent space inference from VAE; 
\item We propose a novel BO algorithm with HiPPO-based space consistency, which is superior to others in a number of standard benchmark tasks.
\end{itemize}

\section{Background: VAE-based BO}

Given an unknown objective function $f:\mathbb{X} \rightarrow \mathbb{R}$, where $\mathbb{X} \subseteq \mathbb{R}^d$ is the input space, Bayesian optimisation aims to find:
$$ x^{*} = \arg\max_{x \in \mathbb{X}}f(x) $$
where $f(x)$ is computationally expensive to evaluate and has no analytical form. The symbols used throughout the paper and their explanations are listed in a table in Appendix \ref{app:notations}. The process consists of three main steps:
\begin{itemize}
\item \textbf{Probabilistic Surrogate Model}: A probabilistic model, typically a Gaussian process (GP), approximates $f(x)$. The GP provides a posterior distribution over $f(x)$ based on observed data $D_{1:t}=\{(x_i,y_i)\}_{i=1:t}$, where $y_i=f(x_i)+\epsilon_i$ and $\epsilon_i$ represents noise.
\item  \textbf{Acquisition Function}: An acquisition function $\alpha(x;D_{1:t})$ guides the selection of the next query point $x_{t+1}$. Popular options include Expected Improvement (EI), Probability of Improvement (PI), and Upper Confidence Bound (UCB). The next point is selected as:
$$x_{t+1}=\arg\max_{x \in \mathbb{X}} \alpha(x;D_{1:t}).$$ 
\item  \textbf{Iterative Process}: The algorithm continuously updates the surrogate model with new observations $(x_{t+1},y_{t+1})$ as it searches for the global optimum.
\end{itemize}

As $d$ increases, the volume of the search space $\mathbb{X}$ grows exponentially. This leads to the following issues: the number of points required to cover $\mathbb{X}$ grows exponentially with $d$;
The average distance between points in $\mathbb{X}$ increases, making it difficult for the surrogate model to interpolate or extrapolate accurately; 
Computational complexity is $O(t^3)$, where $t$ needs to be large for high-dimensional space;
The kernel function becomes less discriminative, i.e., $\lim_{d \to \infty} \text{Var}[k(x, x')] = 0$;
The landscape of the acquisition function $\alpha$ becomes highly multimodal and complex in high dimensions.

The VAE-based dimensionality reduction approach, $\mathbb{Z} \subseteq \mathbb{R}^{d'}$ and $d' \ll d$, for high-dimensional BO can be summarized as: Train a VAE based on current observations, i.e., $\{x_i\}$, to learn mappings: $z \sim \mu_\phi(x),~ x=\mu_\theta (z)$; Reformulate the optimization problem in the latent space, $z^{*}=\arg\max_{z \in \mathbb{Z}} g(z)$; Perform BO in the latent space using a GP surrogate model $g(z) \sim GP({m}_z, k_z(z, z'))$ and acquisition function $\arg\max_{z \in \mathbb{Z}} \alpha(z|g)$; Map the optimal latent point back to the input space, $x^{*}=\mu_\theta(z^{*})$; Add new observation $(x^*, f(x^*))$ to the data, and repeat.

\section{Our Method}

In this section, we first identify the potential functional distribution mismatch between the reconstructed and original spaces in Section \ref{sec:31}. Then, in Section \ref{sec:32}, we propose a HiPPO-based solution to mitigate this mismatch. Finally, in Section \ref{sec:33}, we introduce a new Bayesian optimisation (BO) algorithm that incorporates this approach.

\subsection{Distribution mismatch between reconstructed and original space}
\label{sec:31}

The inferred GP defined over the latent space could induce a distribution of black-box functions in the original space through the decoder. However, there is a distribution mismatch between reconstructed and original functions, which presents a fundamental challenge when applying VAEs to Bayesian optimisation, because it determines the extent the latent space preserves the information of the original space and also the extent we `trust' the found optimal value in the latent space.

The standard VAE loss function combines reconstruction error with latent space regularisation:  
\begin{equation}
 \mathcal{L}(\theta,\phi; x)=\mathbb{E}_{z \sim \mu_\phi(x)} \left[\|x - \mu_\theta (z)\|_2 \right]+\mathcal{KL}[\mu_\phi (z | x)\|p_0(z)]   
 \label{eq:vaeloss}
\end{equation}
where $p_0(z)$ is a predefined prior. While this formulation ensures approximate reconstruction and smooth latent representations, it does not guarantee preservation of the relationships between $\{x\}$. When we fit a Gaussian process $g(z) \sim GP(m_z, k_z)$ in the latent space using observations $\{(z_i, f(x_i))\}$, it implicitly defines a GP for the function in the reconstructed space through the decoder mapping:  
$$
f^{\text{VAE}}(x) = g \circ \mu^{-1}_\theta(x) \sim GP(m_z(\mu_\theta^{-1}(x)), k_z(\mu_\theta^{-1}(x),\mu_\theta^{-1}(x'))).
$$
In an ideal scenario where $\mu_\theta$ is perfectly invertible and deterministic, this would exactly match the GP $f(x) \sim GP(m_x (x), k_x(x, x'))$ fitted in the original space, provided the mean and kernel functions satisfy:  
$$
m_z \circ \mu^{-1}(x) = m_x (x), \quad k_z(\mu_\theta^{-1}(x),\mu_\theta^{-1}(x')) = k_x(x, x').
$$
In practice, the decoder $\mu_\theta$ is neither perfectly invertible nor free from reconstruction errors, leading to inevitable discrepancies between the implied GP of $f^{\text{VAE}}$ and the true GP of $f$. These discrepancies manifest in both the mean and kernel functions:  
$$
\Delta^{\text{mean}} =\|m_z \circ \mu_\theta^{-1}(x) - m_x (x)\|, \quad \Delta^{\text{kernel}} = \|k_z(\mu_\theta^{-1}(x),\mu_\theta^{-1}(x')) - k_x(x, x')\|.
$$
The general reconstruction loss in VAE (i.e., first item of Eq. (\ref{eq:vaeloss})) could reduce $\Delta^{\text{mean}}$ well but not $\Delta^{\text{kernel}}$. 
So, even when reconstruction error is minimised, the geometric relationships between points may not be preserved in the latent space, potentially leading the surrogate model to either over-smooth or over-estimate correlations of data. This can cause the acquisition function to either miss promising regions or exploit false optima. It has been unfortunately ignored by the existing literature.   

A key challenge in addressing this mismatch is that while we can minimise reconstruction error during VAE training, we cannot directly optimise the kernel discrepancy since the true kernel $k_x$ is unknown - fitting a GP in the original high-dimensional space $\mathcal{X}$ is typically infeasible. This motivates our following approach to indirectly reduce this kernel discrepancy, which helps preserve the critical relationships between data points during the latent space construction. By maintaining better consistency between the original and latent space representations, we aim to reduce both $\Delta^{\text{mean}}$ and $\Delta^{\text{kernel}}$, leading to more reliable optimisation performance.

\subsection{Reduce kernel distance via HiPPO memory representation}
\label{sec:32}

\begin{figure}[!t]
    \centering
    \begin{subfigure}[b]{\textwidth}
        \includegraphics[width=\linewidth]{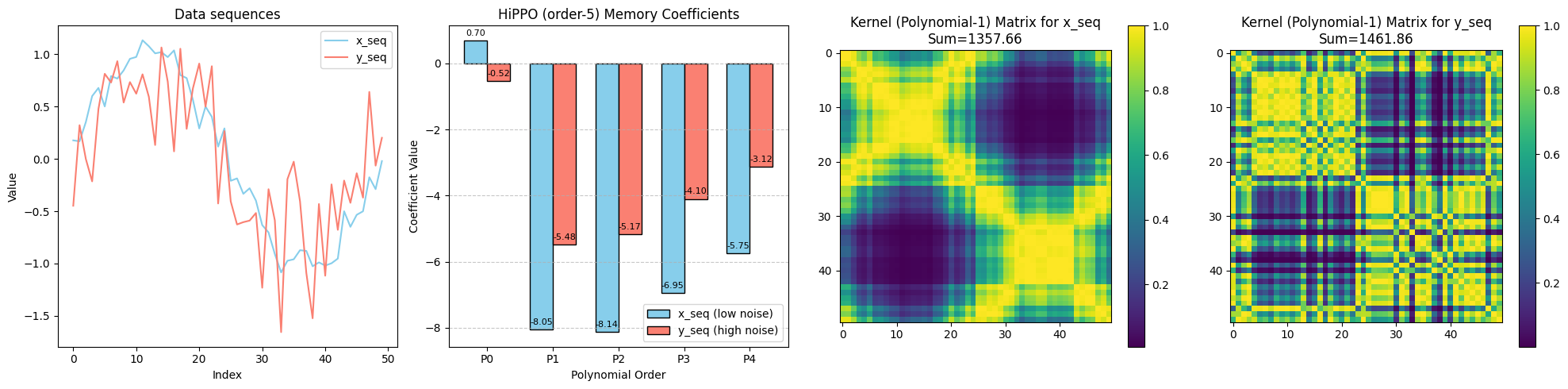}
    \caption{The left subfigure shows two sequences of data points (i.e., \textit{x\_seq} and \textit{y\_seq}) that have roughly similar correlations due to similar functional trends (the kernel matrix between data points is also visualised in the third and fourth subfigures). As shown in the second subfigure, their evaluated HiPPO representations are close.}
    \label{fig:sin}
    \end{subfigure}
    \hfill 
    \begin{subfigure}[b]{\textwidth}
        \includegraphics[width=\linewidth]{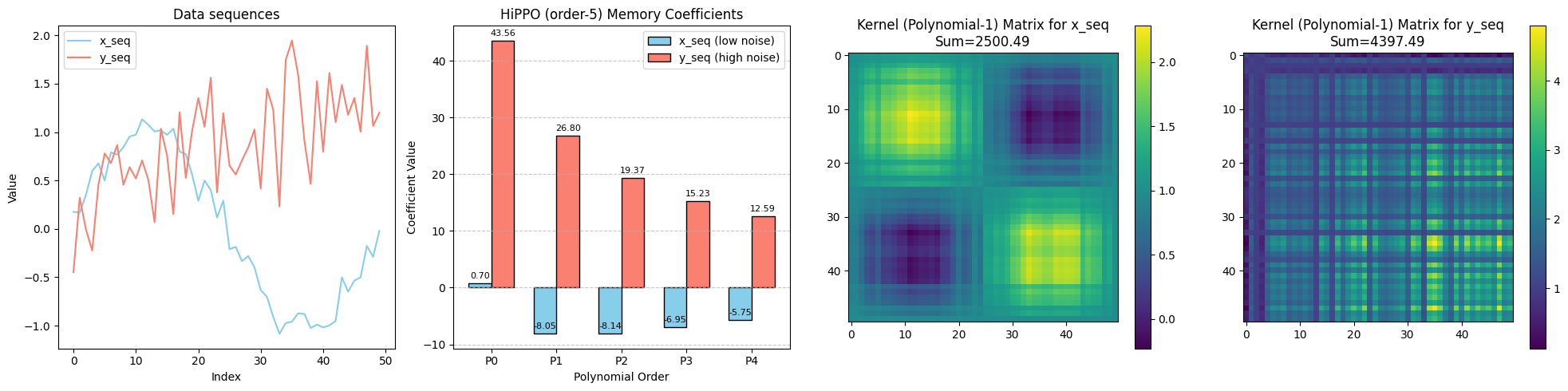}
    \caption{The left subfigure shows two sequences of data points (i.e., \textit{x\_seq} and \textit{y\_seq}) that have dissimilar correlations due to different functional trends (the kernel matrix between data points is also visualised in the third and fourth subfigures). As shown in the second subfigure, their evaluated HiPPO representations are far from each other.}
    \label{fig:cos}
    \end{subfigure}
    \caption{Empirical demonstration of the capability of HiPPO representation in expressing the correlation between data points. }
    \label{fig:main}
\end{figure}

Our idea is to preserve the `kernel distance/correlation’ indirectly with the help of the HiPPO \citep{gu2020hippo,gu2023train}, which is a framework for online memory representation in continuous and discrete-time sequences. The key idea behind HiPPO is to project an input signal onto a basis of orthogonal polynomials (e.g., Legendre, Chebyshev) in a way that optimally summarises the history of the signal in a finite-dimensional state. 

Given (sequentially obtained) data $\{x(t)\}$, HiPPO wants to maintain a memory state $c_t$ that summarizes its history up to time $t$ by projecting $\{x(t)\}$ onto a basis of orthogonal polynomials $\{\mathcal{P}_n\}^{\rho-1}_{n=1}$ (e.g., Legendre, Chebyshev, Laguerre) under measure $\omega$, i.e., $\int_{-\infty}^t \mathcal{P}_n(\tau)\mathcal{P}_{n'}(\tau)\omega(\tau) = \delta_{nn'}$, where $\rho$ is the order that is the highest degree of the orthogonal polynomials used to approximate the input. Given this projection, the original data can be approximated as:
$$
x(\tau) \approx \sum_{n=0}^{\rho-1} c_{n,t} \mathcal{P}_n(\tau) 
$$
where $\tau \leq t$ and $c_{n,t}$ are the time-varying coefficients defined as 
$$
c_{n,t} = \int^t_{-\infty} x(\tau) \mathcal{P}_{n,t}(\tau) \omega_t(\tau) \rm{d} \tau.
$$
At any time $t$, $c_t = [c_{0,t}, c_{1,t}, \ldots,c_{\rho-1,t}]$ serves as the HiPPO memory of $\{x(\tau)\}_{\tau=1:t}$, which encodes the functional form of $\{x(t)\}$ (i.e., the latent correlations between $x$). Similarly, we can also build the corresponding HiPPO memory for the reconstructed observations $\{\bar{x}(t)\}$ from VAE mapping: $\bar{c}_t$, which encodes the geometric shape of $\{\bar{x}(t)\}$ from the latent space. Then, we propose the following regularizer for the VAE: 
\begin{equation}
\begin{aligned}
\|c_t - \bar{c}_t\|.
\end{aligned}
\label{eq:hippodistance}
\end{equation}

\begin{proposition}
If the HiPPO order is larger than the degree of the kernel function, then the closeness of HiPPO representations of two sequences of data would imply their closeness of kernel distance. 
\end{proposition}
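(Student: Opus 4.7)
My plan is to chain two bounds: one relating the HiPPO coefficient distance to a weighted $L^2$ distance between the reconstructed signals (via Parseval's identity), and one relating signal $L^2$ distance to kernel-matrix distance (via Lipschitz continuity of a polynomial kernel on bounded sets). The precondition ``HiPPO order larger than kernel degree'' will be used to ensure that projection onto the first $\rho$ polynomials loses no kernel-relevant information.

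First, I would rescale the polynomial basis $\{\mathcal{P}_n\}$ so that it is orthonormal under the measure $\omega_t$ (the unscaled basis already satisfies $\int \mathcal{P}_n\mathcal{P}_{n'}\omega_t = \delta_{nn'}$, as stated). Applying Parseval's identity to the definition $c_{n,t}=\langle x,\mathcal{P}_n\rangle_{\omega_t}$ gives
$$
\|c_t-\bar c_t\|_2^2 \;=\; \int_{-\infty}^{t}\bigl(\Pi_\rho x(\tau)-\Pi_\rho \bar x(\tau)\bigr)^2\omega_t(\tau)\,d\tau,
$$
where $\Pi_\rho$ denotes the orthogonal projection onto $\mathrm{span}\{\mathcal{P}_0,\dots,\mathcal{P}_{\rho-1}\}$. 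Interpreting the hypothesis as assuming a polynomial kernel $k(u,v)=\sum_{|\alpha|,|\beta|\leq p} a_{\alpha\beta}\,u^{\alpha} v^{\beta}$ with total degree $p<\rho$, the kernel only probes the signal through monomials of degree at most $p$, so on that relevant subspace $\Pi_\rho$ acts as the identity and the right-hand side collapses to the weighted $L^2$ distance $\|x-\bar x\|_{L^2(\omega_t)}^2$.

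Second, I would use the Lipschitz property of $k$ on a bounded domain $\Omega$ containing the ranges of $x$ and $\bar x$: since $k$ is a polynomial of finite degree, $|k(u,v)-k(\bar u,\bar v)|\leq L(|u-\bar u|+|v-\bar v|)$ with $L$ depending only on $p$ and $\mathrm{diam}(\Omega)$. Applying this pointwise to the Gram matrices $K(\tau_i,\tau_j)=k(x(\tau_i),x(\tau_j))$ and $\bar K(\tau_i,\tau_j)=k(\bar x(\tau_i),\bar x(\tau_j))$ and integrating against $\omega_t\otimes\omega_t$ yields
$$
\|K-\bar K\|_{L^2(\omega_t\otimes\omega_t)} \;\leq\; C_p\,\|x-\bar x\|_{L^2(\omega_t)}.
$$
Chaining this with the Parseval bound produces $\|K-\bar K\|\leq C\,\|c_t-\bar c_t\|_2$, which is the claimed ``closeness of kernel distance'' statement. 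A discrete analogue for a finite sequence of evaluation points follows immediately by specialising $\omega_t$ to a sum of Dirac masses.

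The main obstacle is making the Parseval step rigorous when $x$ itself is \emph{not} a polynomial of degree $<\rho$, because then $(I-\Pi_\rho)x\neq 0$ leaves a truncation residual that Parseval alone cannot discard. The hypothesis $\rho>\deg(k)$ must do the work here: I would argue by expanding $k(x(\tau),x(\tau'))$ in the tensor-product basis $\mathcal{P}_m\otimes\mathcal{P}_n$ and noting that only indices with $m,n\leq p$ carry nonzero coefficients, so the high-order tail of $x$ does not couple into the kernel through the orthogonality relations of $\{\mathcal{P}_n\}$. Formalising this decoupling—essentially showing that the map $x\mapsto k(x,x)$ factors through $\Pi_\rho$ whenever $\rho>\deg(k)$—is the technical heart of the argument; for kernels that are not polynomial (e.g.\ RBF), the same outline goes through but with an additional approximation-theoretic tail bound $\|(I-\Pi_\rho)k\|\to 0$ as $\rho\to\infty$ replacing the exact cancellation.
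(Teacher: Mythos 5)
Your route is genuinely different from the paper's: the paper works with the scalar \emph{averaged} pair kernel value $D_x=\frac{1}{N^2}\sum_{m,n}k(x_m,x_n)$, expands a polynomial kernel via the multinomial theorem, and argues that $D_x$ is a function of low-order moments encoded by the HiPPO coefficients; you instead target the full Gram matrix and try to chain a Parseval identity with a Lipschitz bound. Unfortunately the chain breaks exactly at the step you flag, and the proposed repair does not work. Parseval only gives $\|c_t-\bar c_t\|_2=\|\Pi_\rho(x-\bar x)\|_{L^2(\omega_t)}\le\|x-\bar x\|_{L^2(\omega_t)}$, an inequality in the \emph{wrong} direction for your chain; to ``collapse'' the right-hand side to $\|x-\bar x\|_{L^2(\omega_t)}^2$ you would need the truncation tail $(I-\Pi_\rho)(x-\bar x)$ to vanish. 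The hypothesis $\rho>\deg k$ cannot supply this, because the two notions of ``degree'' live on different axes: the kernel degree counts powers of the signal \emph{values} $u=x(\tau)$, whereas $\Pi_\rho$ truncates the expansion of $x$ as a function of the \emph{time} variable $\tau$. ``The kernel only probes monomials of degree $\le p$'' therefore does not place $x$ in any subspace on which $\Pi_\rho$ acts as the identity, and the claimed factorisation of $x\mapsto k(x,x)$ through $\Pi_\rho$ is false: already $x(\tau)^2$ has $\{\mathcal P_n\}$-components up to twice the order of $x$, and $\int x^2\omega=\sum_{n\ge0}c_n^2$ depends on the entire discarded tail.

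A concrete counterexample kills the quantitative conclusion $\|K-\bar K\|\le C\,\|c_t-\bar c_t\|_2$. Take $x\equiv0$ and $\bar x=M\mathcal P_\rho$ with the orthonormal basis of the paper, and the degree-$2$ kernel $k(u,v)=(uv)^2$. Then $c_n=\bar c_n=0$ for all $n<\rho$, so the two HiPPO representations coincide, yet
\begin{equation*}
\iint k\bigl(\bar x(\tau),\bar x(\tau')\bigr)\,\omega(\tau)\,\omega(\tau')\,d\tau\,d\tau'=M^4\Bigl(\int\mathcal P_\rho^2\,\omega\Bigr)^{2}=M^4,
\end{equation*}
while the same integral for $x$ is $0$; no constant $C$ can absorb an arbitrary $M$, and the discrete version with samples at quadrature nodes behaves identically. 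The paper genuinely sidesteps this only for the \emph{linear} kernel, where $D_x$ is a function of the single linear functional $\frac1N\sum_m x_m\propto c_0$; for degree $\ge2$ the same tension between value-moments and time-moments resurfaces in the paper's own argument, so you should not expect to borrow a fix from there. If you want to keep your $L^2$-chaining strategy you must add an assumption that controls the tail --- e.g.\ that both sequences lie in $\mathrm{span}\{\mathcal P_0,\dots,\mathcal P_{\rho-1}\}$, or that their coefficient tails are summably small --- and state the result with the corresponding residual term.
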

\begin{proof}
Please see the  Appendix for details. 
\end{proof}

The above proposition implies that reducing the distance between $c_t$ and $\bar{c}_t$ could help preserve the relationship (i.e., kernel distances) between $\{x\}$ and then reduce the distribution mismatch identified in Section \ref{sec:31} (also visualised in Figs. \ref{fig:sin} and \ref{fig:cos}, and the setting details can be found in Appendix \ref{sec:fig2}). As we observe more (expensive) function values of the new data points in BO (i.e., the expansion of $\{x(t)\}$), we of course need to update the HiPPO memory to incorporate the new observations. Fortunately, such an update is quite efficient in HiPPO because it is proven that the history $c(t)$ follows a linear ordinary differential equation (ODE):
\begin{equation}
\begin{aligned}
\frac{\rm d c_t}{\rm d t} = A_t c_{t} + B_t x(t)
\end{aligned}
\label{eq:hippo}
\end{equation}
where $A$ and $B$ depend on used polynomials $\{\mathcal{P}_n\}^{d-1}_{n=1}$. Such ODE formulation allows efficient online updates, making it useful for continuing observations from BO.

\subsection{BO with HiPPO-based space consistency}
\label{sec:33}

\begin{figure}[!t]
    \centering
    \includegraphics[width=0.9\linewidth]{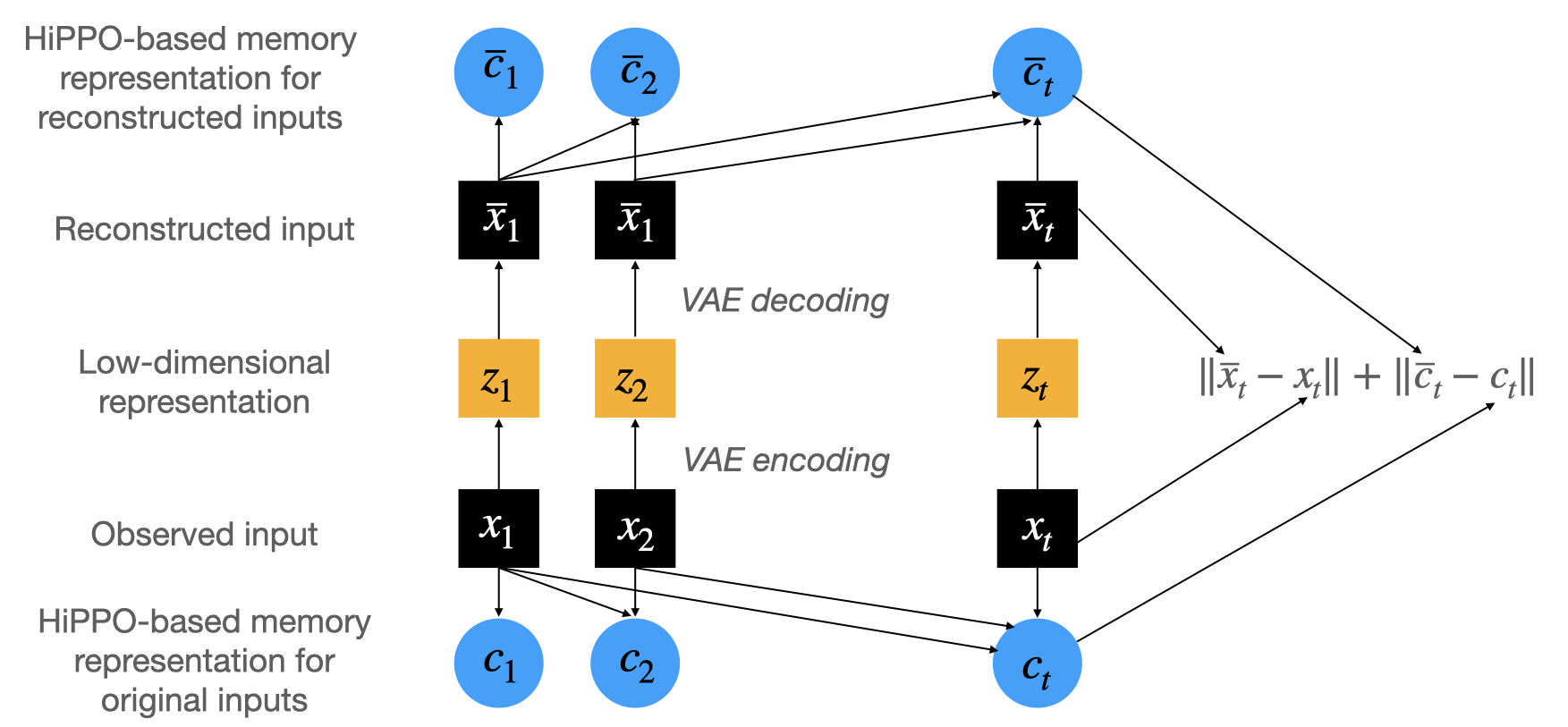}
    \caption{Visualisation of our method.}
    \label{fig:flow}
\end{figure}

To reduce the distribution mismatch between the latent space and data space, we propose to use the following new loss function for VAE training:
\begin{equation}
 \mathcal{L}(\theta,\phi; x)=\mathbb{E}_{q_\phi(z | x)} \left [ \|x - \mu_\theta (z)\|_2 + \|c - \bar{c}\|\right ]+\mathcal{KL}[\mu_\phi (z | x)\|p_0(z)]
\label{eq:hvae}
\end{equation}
where the first item at the right-hand side is now composed of two parts: one is reconstruction loss, and the other is HiPPO-based space consistency constraint (here, we select HiPPO-LegS to keep the whole pass observations in memory and leave the study on the different polynomials to future work). The procedure is visualised in Fig. \ref{fig:flow}, and the pseudocode in Algorithm \ref{alg:HiBBO} summarises our approach. The budget $\mathcal{B}$ refers to the maximum number of evaluations of the unknown functions, and frequency $\nu$ denotes the number of BO steps performed before updating VAE model. There are mainly two procedures in the Algorithm \ref{alg:HiBBO}. The first procedure is the update of VAE under the HiPPO-based space consistency constraint. We will firstly reconstruct the data $X$ based on VAE, then construct the HiPPO-based memory representations $c$, and finally, the loss function in Eq. (\ref{eq:hvae}) is used to guide the optimisation of the encoder and decoder of VAE. Compared with classical VAE training, this new solution is expected to reduce the distribution mismatch between the latent space and data space. The second procedure is the BO steps in the latent space, which is common to all VAE-based BO.

\begin{algorithm}[!t]
\caption{HiBBO}
\label{alg:HiBBO}
\begin{algorithmic}
\STATE \textbf{Input:} Budget $\mathcal{B}$, frequency $\nu$, some data points $X$
\STATE \textbf{Output:} optimum value $x^*$
\FOR{$j \in [1, {\lceil \mathcal{B}/\nu \rceil}]$ }
    \STATE \textit{// optimise VAE }
    \STATE initialise $c=c_t=0$ and $\bar{c}=\bar{c}_t=0$;
    \FOR{number of epochs}
    \FOR{ each $x_i \in X $} 
        \STATE Reconstruct $\bar{x}_i = \mu_\theta \circ \mu_\phi (x_i)$ ;
        \STATE Compute HiPPO memory $c_{t+1}$ (with $x_i$ and $c_t$) and $\bar{c}_{t+1}$ (with $\bar{x}_i$ and $\bar{c}_t$) by Eq. (\ref{eq:hippo});
        \STATE $c = [c_t, c_{t+1} ]$, $\bar{c} = [\bar{c}_t, \bar{c}_{t+1} ]$;
        \STATE $c_t = c_{t+1}$, $\bar{c}_t = \bar{c}_{t+1}$;
    \ENDFOR
    \STATE Optimise $\mu_\theta \circ \mu_\phi$ by Eq. (\ref{eq:hvae});
    \ENDFOR
    \STATE \textit{// BO in latent space}
    \FOR{$k=0$ \TO $\nu-1$ \AND $\alpha(\hat{z}_{j,k+1}) \geq \eta$}
        \STATE Fit surrogate GP on $\langle z_i, f(x_i) \rangle$
        \STATE Optimise $\alpha$ for $\hat{z}_{j, k+1}$;
        \STATE Use decoder to map $\hat{z}_{j, k+1}$ to $\hat{x}$;
        \STATE Evaluate $f(\hat{x})$ to augment data $X = [X, \hat{x}]$;
    \ENDFOR
\ENDFOR
\STATE $x^* = \arg\max_{x \in X} f(x)$.
\end{algorithmic}
\end{algorithm}

\section{Related Works}
\label{headings}

The literature on VAE-based high-dimensional Bayesian Optimisation (BO) explores methods to efficiently optimise expensive black-box functions in high-dimensional spaces by leveraging VAEs for dimensionality reduction. Traditional BO struggles in high dimensions due to the curse of dimensionality, but VAEs provide a solution by learning a compressed, structured latent space that retains the essential features of the input data. One of the earliest and most influential applications of VAE-based BO was in automatic chemical design, where \cite{gomez2018automatic} used a VAE to encode molecular structures into a continuous latent space, enabling gradient-based optimisation of molecular properties. However, this method often generated invalid molecules due to BO exploring regions of latent space far from the training data. To address this, \cite{tripp2020sample} proposed constrained Bayesian optimisation, which restricts the search to regions likely to decode into valid molecules, significantly improving the validity and utility of generated samples. Building on this, \cite{grosnit2021high} introduced a method that combines VAEs with deep metric learning to structure the latent space using label information from the black-box function. This approach enhances the smoothness and informativeness of the latent space, improving the performance of the GP surrogate model used in BO. Their method achieved state-of-the-art results on benchmarks like penalised logP for molecule generation, requiring significantly less labelled data than previous methods.

Recent studies have also explored structured priors, such as GP \citep{ramchandranhigh}), for the latent variable of VAE to capture the relationship between data and their external features. Meanwhile, \cite{notin2021improving} proposed to leverage the epistemic uncertainty of the decoder to guide the optimisation process to improve its robustness and sample validity. Applications of VAE-based BO extend beyond chemistry. In protein design, \cite{zeng2024antibody} utilised VAE-BO to efficiently generate diverse, high-affinity antibody candidates, which were subsequently validated through in vitro synthesis. Similarly, in materials science, \cite{tian2024high} demonstrated the use of VAE-BO for efficient design of electromagnetic metamaterials by reducing the high-dimensional microstructure space into a compact latent space. VAE-BO was also used in robotics \citep{antonova2020bayesian} to enable ultra data-efficient controller tuning by learning low-dimensional latent representations of simulated trajectories, eliminating the need for expert-designed features and reducing exploration in undesirable state spaces. 

In summary, VAE-based BO is a powerful framework for optimising complex, high-dimensional functions. However, its success depends critically on the quality of the latent space for which this work offers a new strategy.

\section{Experiments}

We evaluate the efficacy of our proposed method, \textbf{HiBBO}, across four high-dimensional optimisation problems: a standard Bayesian optimisation benchmark function, an MNIST-based synthetic problem, a 2D shape optimisation problem, and a chemical design application. These problems are selected to demonstrate the versatility of HiBBO in handling both synthetic and real-world scenarios while operating effectively in high-dimensional spaces.

For comparison, we consider several VAE-based Bayesian optimisation methods as baselines. \textbf{BASE} serves as the simplest baseline, employing a standard VAE without any additional constraints on the latent space. \textbf{METRIC} \citep{grosnit2021high} improves upon this by integrating a triplet loss to enhance the discriminative power of the learned latent representations. In contrast, \textbf{GPior} \citep{ramchandranhigh} replaces the conventional Gaussian prior with a Gaussian Process prior during VAE training, allowing the model to incorporate additional information directly into the latent structure. Finally, \textbf{REWEIGH} \citep{gomez2018automatic} modifies the VAE’s training objective by reweighting data points to prioritise those with higher performance values. 

To ensure a fair and unbiased comparison, all methods share the same VAE architecture for each problem, and training procedures—including optimisation steps, retraining schedules, BO budget, and hyperparameter settings—are kept consistent across experiments. This controlled setup eliminates architectural and training-related confounders, ensuring that performance differences can be attributed solely to the distinct algorithmic contributions of each method.

\subsection{Standard function optimisation}

\begin{figure}[!t]
    \centering
    \begin{subfigure}[b]{0.49\textwidth}
        \includegraphics[width=\textwidth]{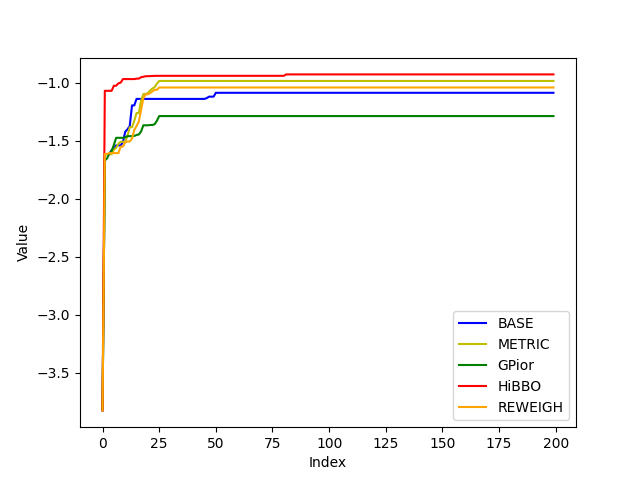}
        \caption{Ackley}
        \label{fig:ackley}
    \end{subfigure}
    \hfill 
    \begin{subfigure}[b]{0.49\textwidth}
        \includegraphics[width=\textwidth]{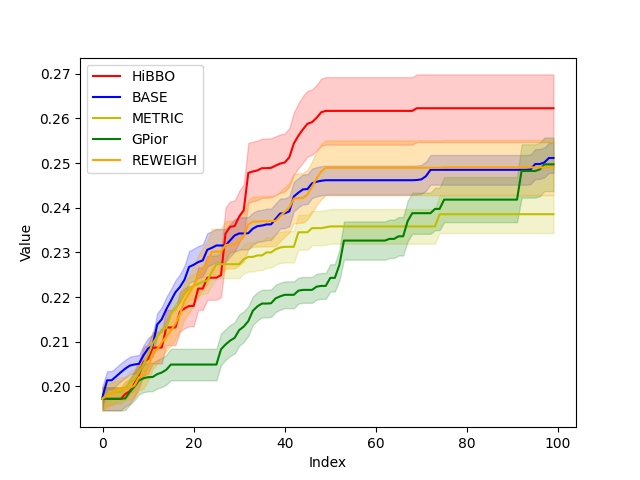}
        \caption{MNIST}
        \label{fig:mnist}
    \end{subfigure}
    \caption{results on standard function optimisation and MNIST-based synthetic problem}
    \label{fig:main}
\end{figure}

We begin our experimental evaluation with a high-dimensional optimisation benchmark using the Ackley function, implemented through the \textbf{BoTorch} framework\footnote{https://botorch.org/}. The Ackley function represents a challenging multimodal optimisation problem in $d$-dimensional space, typically evaluated on the hypercube $[-32.768, 32.768]^d$. For our experiments, we set $d=1000$ to create an especially demanding high-dimensional optimisation scenario. This function features numerous local minima but has a single global minimum of 0 at the origin, providing a clear optimisation target for evaluating algorithm performance. The dimension of latent space from VAE is set $10$, representing a 100-fold dimensionality reduction from the original 1000-dimensional space. More details of the setup can be found in the Appendix. As shown in Figure \ref{fig:ackley}, HiBBO demonstrates superior performance compared to baseline methods. Notably, it achieves solutions closest to the global minimum, while also converging faster in terms of iteration count. Since this problem is relatively easy, the differences among various methods are small.

\subsection{MNIST-based synthetic problem}

The second benchmark problem adopts the official high-dimensional optimisation example from BoTorch, where the black-box objective function maps MNIST digit images to scalar scores through a carefully designed evaluation pipeline. In this task, the original 784-dimensional search space ($28 \times 28$ pixels) represents handwritten digits from the MNIST dataset, with the optimisation target being the discovery of images minimising a score function that quantifies deviation from the ideal digit `3'. The scoring mechanism first processes each image through a fixed pretrained CNN classifier to obtain a 10-dimensional probability vector $prob$ across digit classes, then computes the weighted inner product $prob \cdot score$ where $score = \exp(-2 \times (v - 3)^2)$ and $v = [0,1,...,9]$. This formulation creates a smooth optimisation landscape with the global minimum (score = 1.0) achieved when the classifier confidently predicts the digit as `3'. As evidenced in Figure \ref{fig:mnist} (4 seeds), HiBBO outperforms competing methods by consistently generating images with both lower final scores, demonstrating its effectiveness in navigating complex, perception-driven search spaces. The complete experimental setup, including CNN architecture details and optimisation hyperparameters, is documented in the Appendix.

\subsection{Shape optimisation}

Another illustrative task, originally proposed by \citep{tripp2020sample}, involves optimising for the shape with the largest total area within the space of $64 \times 64$ binary images—that is, maximising the number of pixels with a value of 1. The underlying generative model is a variational autoencoder (VAE) with a simple convolutional architecture (details provided in the Appendix), and its latent space is two-dimensional, i.e., $\mathbb{R}^2$. Given the low-dimensional nature of the latent space, we perform direct optimisation by enumerating a uniform grid over the range $[-3, 3]^2$ instead of GP, following \citep{tripp2020sample}.
The results are presented in Fig. \ref{fig:shape} (5 seeds). As shown, our proposed method, HiBBO, achieves the best performance compared to baseline approaches. When combined with the reweighting strategy (\textbf{HiPPO-RW}), the method still attains a strong final objective value, though slightly lower than that of HiBBO alone.

\begin{figure}[!t]
    \centering
    \begin{subfigure}[b]{0.49\textwidth}
        \includegraphics[width=\textwidth]{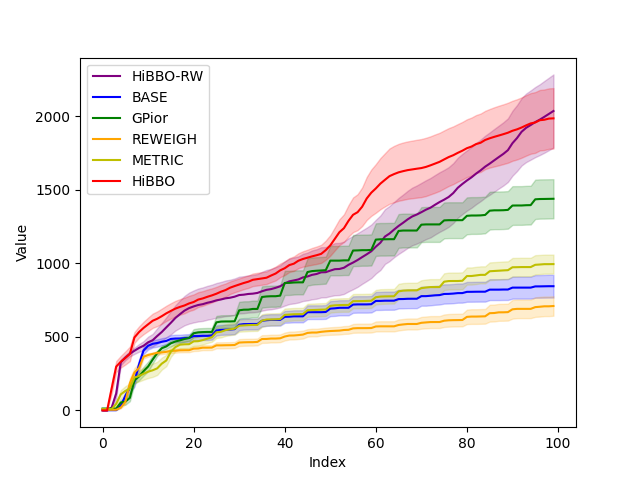}
        \caption{Shape}
        \label{fig:shape}
    \end{subfigure}
    \hfill 
    \begin{subfigure}[b]{0.49\textwidth}
        \includegraphics[width=\textwidth]{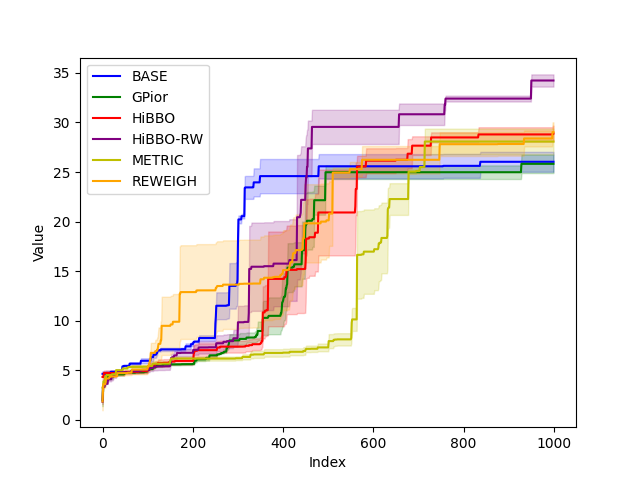}
        \caption{Chem}
        \label{fig:chem}
    \end{subfigure}
    \caption{results on shape optimisation and chemical design}
    \label{fig:two}
\end{figure}


\subsection{Chemical design}

This experiment builds upon the molecular optimisation task introduced by \citet{tripp2020sample} and further explored by \citet{ramchandranhigh} and \citet{grosnit2021high}. We adopt the standardised benchmark from \citet{gomez2018automatic}, which involves synthesising molecules with the highest penalised water-octanol partition coefficient (logP), using the ZINC250k dataset \citep{irwin2012zinc} as the starting point. To ensure valid chemical structures, we use a junction tree VAE \citep{jin2018junction} to generate the latent space. Optimisation is performed using the expected improvement acquisition function. For evaluating $c$ and $c\_$, we use vector representations concatenated from \textit{tree\_vecs} and \textit{mol\_vecs} obtained from the junction tree VAE encoder, rather than the original SMILES strings. All models are trained and evaluated under consistent settings, following the original codebase\footnote{https://github.com/cambridge-mlg/weighted-retraining}. Additional implementation details are provided in the Appendix. Fig. \ref{fig:chem} (5 seeds) shows the optimisation results. Our method, HiBBO, achieves excellent final objective values, and the HiBBO-RW variant stands out, attaining the highest logP score with the lowest query budget among all compared methods. This reinforces the benefit of incorporating reweighting, particularly in complex design tasks like molecular optimisation.

\section{Conclusion}

In this work, we introduced HiBBO, a novel Bayesian Optimisation (BO) framework designed to address the persistent challenge of functional distribution mismatch in VAE-based high-dimensional optimisation. By incorporating HiPPO-based memory representations into the latent space construction, HiBBO preserves not only the mean but also the kernel relationships between data points, which is often overlooked in prior work. Our theoretical analysis and empirical results demonstrate that this added space consistency significantly enhances the fidelity of the surrogate model in the latent space, leading to more reliable and efficient optimisation. The proposed HiPPO-based regularisation is lightweight, generalisable, and compatible with existing VAE architectures, making it a practical enhancement for a wide range of BO applications.

Extensive experiments across diverse domains—including synthetic benchmarks, image-based optimisation, shape design, and molecular property optimisation—validate the effectiveness of HiBBO. Compared to state-of-the-art VAE-BO methods, HiBBO consistently achieves superior performance in terms of convergence speed, final objective value, and query efficiency. Notably, the integration of a data reweighting strategy further amplifies its performance. These results underscore the importance of preserving functional geometry during latent space learning and open new avenues for applying BO in complex, high-dimensional tasks such as neural architecture search, materials discovery, and drug design. Future work may explore alternative polynomial bases in HiPPO, adaptive memory mechanisms, and broader integration with other generative models to further enhance the robustness and scalability of high-dimensional BO.

\section*{Reproducibility and LLMs Statement}

The implementation and experimental details are provided in the Appendix, and the code is included in the supplementary materials. Large Language Models (LLMs) were used solely for manuscript polishing and proofreading.

\bibliography{iclr2026_conference}

\begin{thebibliography}{26}
\providecommand{\natexlab}[1]{#1}
\providecommand{\url}[1]{\texttt{#1}}
\expandafter\ifx\csname urlstyle\endcsname\relax
  \providecommand{\doi}[1]{doi: #1}\else
  \providecommand{\doi}{doi: \begingroup \urlstyle{rm}\Url}\fi

\bibitem[Agarwal et~al.(2025)Agarwal, Arivazhagan, Das, Swamy, Khosla, and Gangadharaiah]{agarwalsearching}
Dhruv Agarwal, Manoj~Ghuhan Arivazhagan, Rajarshi Das, Sandesh Swamy, Sopan Khosla, and Rashmi Gangadharaiah.
\newblock Searching for optimal solutions with llms via bayesian optimization.
\newblock In \emph{International Conference on Learning Representations}, 2025.

\bibitem[Antonova et~al.(2020)Antonova, Rai, Li, and Kragic]{antonova2020bayesian}
Rika Antonova, Akshara Rai, Tianyu Li, and Danica Kragic.
\newblock Bayesian optimization in variational latent spaces with dynamic compression.
\newblock In \emph{Conference on Robot Learning}, pp.\  456--465. PMLR, 2020.

\bibitem[Brown et~al.(2024)Brown, Cioba, and Bogunovic]{brown2024sample}
Theodore Brown, Alexandru Cioba, and Ilija Bogunovic.
\newblock Sample-efficient bayesian optimisation using known invariances.
\newblock \emph{Advances in Neural Information Processing Systems}, 37:\penalty0 47931--47965, 2024.

\bibitem[Brunzema et~al.(2025)Brunzema, Jordahn, Willes, Trimpe, Snoek, and Harrison]{brunzemabayesian}
Paul Brunzema, Mikkel Jordahn, John Willes, Sebastian Trimpe, Jasper Snoek, and James Harrison.
\newblock Bayesian optimization via continual variational last layer training.
\newblock In \emph{International Conference on Learning Representations}, 2025.

\bibitem[Frazier \& Wang(2015)Frazier and Wang]{frazier2015bayesian}
Peter~I Frazier and Jialei Wang.
\newblock Bayesian optimization for materials design.
\newblock In \emph{Information Science for Materials Discovery and Design}, pp.\  45--75. Springer, 2015.

\bibitem[G{\'o}mez-Bombarelli et~al.(2018)G{\'o}mez-Bombarelli, Wei, Duvenaud, Hern{\'a}ndez-Lobato, S{\'a}nchez-Lengeling, Sheberla, Aguilera-Iparraguirre, Hirzel, Adams, and Aspuru-Guzik]{gomez2018automatic}
Rafael G{\'o}mez-Bombarelli, Jennifer~N Wei, David Duvenaud, Jos{\'e}~Miguel Hern{\'a}ndez-Lobato, Benjam{\'\i}n S{\'a}nchez-Lengeling, Dennis Sheberla, Jorge Aguilera-Iparraguirre, Timothy~D Hirzel, Ryan~P Adams, and Al{\'a}n Aspuru-Guzik.
\newblock Automatic chemical design using a data-driven continuous representation of molecules.
\newblock \emph{ACS Central Science}, 4\penalty0 (2):\penalty0 268--276, 2018.

\bibitem[Grosnit et~al.(2021)Grosnit, Tutunov, Maraval, Griffiths, Cowen-Rivers, Yang, Zhu, Lyu, Chen, Wang, et~al.]{grosnit2021high}
Antoine Grosnit, Rasul Tutunov, Alexandre~Max Maraval, Ryan-Rhys Griffiths, Alexander~I Cowen-Rivers, Lin Yang, Lin Zhu, Wenlong Lyu, Zhitang Chen, Jun Wang, et~al.
\newblock High-dimensional bayesian optimisation with variational autoencoders and deep metric learning.
\newblock \emph{arXiv preprint arXiv:2106.03609}, 2021.

\bibitem[Gu et~al.(2020)Gu, Dao, Ermon, Rudra, and R{\'e}]{gu2020hippo}
Albert Gu, Tri Dao, Stefano Ermon, Atri Rudra, and Christopher R{\'e}.
\newblock Hippo: Recurrent memory with optimal polynomial projections.
\newblock \emph{Advances in Neural Information Processing Systems}, 33:\penalty0 1474--1487, 2020.

\bibitem[Gu et~al.(2023)Gu, Johnson, Timalsina, Rudra, and R{\'e}]{gu2023train}
Albert Gu, Isys Johnson, Aman Timalsina, Atri Rudra, and Christopher R{\'e}.
\newblock How to train your hippo: State space models with generalized orthogonal basis projections.
\newblock In \emph{International Conference on Learning Representations}, 2023.

\bibitem[Hvarfner et~al.(2024)Hvarfner, Hutter, and Nardi]{hvarfner2024general}
Carl Hvarfner, Frank Hutter, and Luigi Nardi.
\newblock A general framework for user-guided bayesian optimization.
\newblock In \emph{International Conference on Learning Representations}, 2024.

\bibitem[Irwin et~al.(2012)Irwin, Sterling, Mysinger, Bolstad, and Coleman]{irwin2012zinc}
John~J Irwin, Teague Sterling, Michael~M Mysinger, Erin~S Bolstad, and Ryan~G Coleman.
\newblock Zinc: A free tool to discover chemistry for biology.
\newblock \emph{Journal of Chemical Information and Modeling}, 52\penalty0 (7):\penalty0 1757--1768, 2012.

\bibitem[Jin et~al.(2018)Jin, Barzilay, and Jaakkola]{jin2018junction}
Wengong Jin, Regina Barzilay, and Tommi Jaakkola.
\newblock Junction tree variational autoencoder for molecular graph generation.
\newblock In \emph{International Conference on Machine Learning}, pp.\  2323--2332. PMLR, 2018.

\bibitem[Kandasamy et~al.(2018)Kandasamy, Neiswanger, Schneider, Poczos, and Xing]{kandasamy2018neural}
Kirthevasan Kandasamy, Willie Neiswanger, Jeff Schneider, Barnabas Poczos, and Eric~P Xing.
\newblock Neural architecture search with bayesian optimisation and optimal transport.
\newblock \emph{Advances in Neural Information Processing Systems}, 31, 2018.

\bibitem[Kingma \& Welling(2022)Kingma and Welling]{kingma2022autoencodingvariationalbayes}
Diederik~P Kingma and Max Welling.
\newblock Auto-encoding variational bayes.
\newblock \emph{arxiv.org/abs/1312.6114}, 2022.

\bibitem[Lee et~al.(2025)Lee, Park, Chu, Yoon, and Kim]{leelatent}
Seunghun Lee, Jinyoung Park, Jaewon Chu, Minseo Yoon, and Hyunwoo~J Kim.
\newblock Latent bayesian optimization via autoregressive normalizing flows.
\newblock In \emph{International Conference on Learning Representations}, 2025.

\bibitem[Li et~al.(2024)Li, Rudner, and Wilson]{li2024study}
Yucen~Lily Li, Tim~GJ Rudner, and Andrew~Gordon Wilson.
\newblock A study of bayesian neural network surrogates for bayesian optimization.
\newblock In \emph{International Conference on Learning Representations}, 2024.

\bibitem[Moriconi et~al.(2020)Moriconi, Deisenroth, and Sesh~Kumar]{moriconi2020high}
Riccardo Moriconi, Marc~Peter Deisenroth, and KS~Sesh~Kumar.
\newblock High-dimensional bayesian optimization using low-dimensional feature spaces.
\newblock \emph{Machine Learning}, 109\penalty0 (9):\penalty0 1925--1943, 2020.

\bibitem[Notin et~al.(2021)Notin, Hern{\'a}ndez-Lobato, and Gal]{notin2021improving}
Pascal Notin, Jos{\'e}~Miguel Hern{\'a}ndez-Lobato, and Yarin Gal.
\newblock Improving black-box optimization in vae latent space using decoder uncertainty.
\newblock \emph{Advances in Neural Information Processing Systems}, 34:\penalty0 802--814, 2021.

\bibitem[Ramchandran et~al.(2025)Ramchandran, Haussmann, and L{\"a}hdesm{\"a}ki]{ramchandranhigh}
Siddharth Ramchandran, Manuel Haussmann, and Harri L{\"a}hdesm{\"a}ki.
\newblock High-dimensional bayesian optimisation with gaussian process prior variational autoencoders.
\newblock In \emph{International Conference on Learning Representations}, 2025.

\bibitem[Tian et~al.(2024)Tian, Yang, Zhou, Zhou, Deng, Ji, He, and Liu]{tian2024high}
Zhichao Tian, Yang Yang, Sui Zhou, Tian Zhou, Ke~Deng, Chunlin Ji, Yejun He, and Jun~S Liu.
\newblock High-dimensional bayesian optimization for metamaterial design.
\newblock \emph{Materials Genome Engineering Advances}, 2\penalty0 (4):\penalty0 e79, 2024.

\bibitem[Tripp et~al.(2020)Tripp, Daxberger, and Hern{\'a}ndez-Lobato]{tripp2020sample}
Austin Tripp, Erik Daxberger, and Jos{\'e}~Miguel Hern{\'a}ndez-Lobato.
\newblock Sample-efficient optimization in the latent space of deep generative models via weighted retraining.
\newblock \emph{Advances in Neural Information Processing Systems}, 33:\penalty0 11259--11272, 2020.

\bibitem[Wang et~al.(2024)Wang, Dahl, Swersky, Lee, Nado, Gilmer, Snoek, and Ghahramani]{wang2024pre}
Zi~Wang, George~E Dahl, Kevin Swersky, Chansoo Lee, Zachary Nado, Justin Gilmer, Jasper Snoek, and Zoubin Ghahramani.
\newblock Pre-trained gaussian processes for bayesian optimization.
\newblock \emph{Journal of Machine Learning Research}, 25\penalty0 (212):\penalty0 1--83, 2024.

\bibitem[Williams \& Rasmussen(1995)Williams and Rasmussen]{williams1995gaussian}
Christopher Williams and Carl Rasmussen.
\newblock Gaussian processes for regression.
\newblock \emph{Advances in Neural Information Processing Systems}, 8, 1995.

\bibitem[Yuan et~al.(2019)Yuan, Chatzinikolaidis, and Li]{yuan2019bayesian}
Kai Yuan, Iordanis Chatzinikolaidis, and Zhibin Li.
\newblock Bayesian optimization for whole-body control of high-degree-of-freedom robots through reduction of dimensionality.
\newblock \emph{IEEE Robotics and Automation Letters}, 4\penalty0 (3):\penalty0 2268--2275, 2019.

\bibitem[Zeng et~al.(2024)Zeng, Elliott, Maffettone, Greenside, Bastani, and Gardner]{zeng2024antibody}
Yimeng Zeng, Hunter Elliott, Phillip Maffettone, Peyton Greenside, Osbert Bastani, and Jacob~R Gardner.
\newblock Antibody design with constrained bayesian optimization.
\newblock In \emph{ICLR Workshop on Generative and Experimental Perspectives for Biomolecular Design}. GEM workshop, ICLR 2024, 2024.

\bibitem[Ziomek \& Ammar(2023)Ziomek and Ammar]{ziomek2023random}
Juliusz~Krzysztof Ziomek and Haitham~Bou Ammar.
\newblock Are random decompositions all we need in high dimensional bayesian optimisation?
\newblock In \emph{International Conference on Machine Learning}, pp.\  43347--43368. PMLR, 2023.

\end{thebibliography}
\bibliographystyle{iclr2026_conference}

\appendix
\section{Appendix}

\subsection{Notation Table}
\label{app:notations}

Table \ref{tab:note} is the notation table to demonstrate the notation used in this paper.

\begin{table}[ht]
\small
\centering
    \caption{Notation table}
    \label{tab:note}
    \begin{tabular}{p{5cm}|p{8cm}}
    \hline
    \textbf{Notation} & \textbf{Meaning}\\
    \hline 
    $ D = \left\{(\mathbf{X}, \mathbf{Y})\right\} = \left\{(x_i, y_i)\right\}_{i=1}^t $ & a dataset with $t$ data points\\
    $\mathcal{X} \subseteq \mathbb{R}^d$ & ($d$-dimensional) input space\\
    $\mathcal{Y} \subseteq \mathbb{R}$ & output space\\
    $\mathcal{Z} \subseteq \mathbb{R}^{d'}$ & ($d'$-dimensional) latent space, $d' \ll d$\\
    $\alpha $ & acquisition function\\
    $m$ & mean function of a GP\\
    $k$ & kernel function of a GP\\
    $\mu_\phi$ & encoder parameterized by $\phi$\\
    $\mu_\theta$ & decoder parameterized by $\theta$\\
    $f$ & a function in original space $\mathcal{X} \rightarrow \mathcal{Y}$\\
    $g$ & a function in latent space $\mathcal{Z} \rightarrow \mathcal{Y}$\\
    $f^{\text{VAE}}$ & a reconstructed function in original space $\mathcal{X} \rightarrow \mathcal{Y}$\\
    $m_z$ & mean function of the GP in latent space\\
    $k_z$ & kernel function of the GP in latent space\\
    $\{\mathcal{P}_n\}^{d-1}_{n=1}$ & orthogonal polynomials \\
    $\bar{x}$ & a data point in reconstructed space\\
    $c_t$ & HiPPO representation of data (in original space) until $t$\\
    $\bar{c}_t$ & HiPPO representation of data (in reconstructed space) until $t$\\
    $A, B$ & HiPPO coefficient matrices \\ 
    $\rho$ & HiPPO order \\
    $\mathcal{B}$ & BO Budget\\
    $\nu$ & VAE update frequency in BO \\
    \hline 
    \end{tabular}
\end{table}

\subsection{Proof of Proposition 1}

To investigate whether the closeness in HiPPO memory representations implies the closeness in average pair kernel distances, we firstly formalise them as follows: given two sequences $\{x_i\}_{i=1}^N$ and $\{\bar{x}_i\}_{i=1}^N$, 
\begin{itemize}
    \item The HiPPO framework projects it onto a basis of orthogonal polynomials (e.g., Legendre) up to some order $\rho$, producing finite-dimensional states $c_x \in \mathbb{R}^\rho$ and $c_{\bar{x}} \in \mathbb{R}^\rho$ that summarise two sequences. Their closeness can be measured by $\|c_x - c_{\bar{x}}\|_2$.
    \item For a kernel $k(\cdot, \cdot)$ (e.g., linear, Gaussian), the average pair kernel distance for $\{x_i\}$ is 
     $D_x = \frac{1}{N^2} \sum_{m=1}^N \sum_{n=1}^N k(x_m, x_n).
     $ and the same applies to $D_{\bar{x}}$. Their closeness is measured by $\|D_x - D_y\|_2$.
\end{itemize}

Recall the definition of $k$-th moment of a function $f(t)$ is $\int t^k f(t) \rm{d} t$. These moments indeed capture the geometric shape of the $f$. According to the definition of $c_x = \left [\int_{-\infty}^t f_x(\tau)\mathcal{P}_0(\tau) \rm{d} \tau, \int_{-\infty}^t f_x(\tau)\mathcal{P}_1(\tau) \rm{d} \tau, \ldots, \int_{-\infty}^t f_x(\tau)\mathcal{P}_{\rho-1}(\tau) \rm{d} \tau \right ]$ where $\mathcal{P}$ are orthogonal polynomials, $c$ can be seen as linear combination of moments of $f_x$ up to order $\rho$, which is the reason why it serves as a good memory/representation of $\{x\}$ in the literature. Under same polynomial basis, if $c_x \approx c_{\bar{x}}$, then $\{x\}$ and $\{\bar{x}\}$ share approximately same moments up to order $\rho$. 

Next, we are going to show that kernel distances can be represented by moments. Firstly, for certain kernels (e.g., linear $k(a,b) = a^T b$), the average pair kernel distance directly relates to moments:
\[
D_x = \frac{1}{N^2} \sum_{m,n} x_m^T x_n = \left(\frac{1}{N} \sum_{m=1}^N x_m\right)^T \left(\frac{1}{N} \sum_{n=1}^N x_n\right) = \|\kappa_x\|_2^2
\]
and
\[
D_{\bar{x}} = \frac{1}{N^2} \sum_{m,n} \bar{x}_m^T \bar{x}_n = \left(\frac{1}{N} \sum_{m=1}^N \bar{x}_m\right)^T \left(\frac{1}{N} \sum_{n=1}^N \bar{x}_n\right) = \|\kappa_{\bar{x}}\|_2^2
\]
where $\kappa_x$ is the mean of $\{x_i\}$. Thus, $\|D_x - D_{\bar{x}}\|_2 = \|\|\mu_x\|_2^2 - \|\mu_{\bar{x}}\|_2^2 \|_2$, which is small if $\kappa_x \approx \kappa_{\bar{x}}$ (implied by $c_x \approx c_{\bar{x}}$ for first-order HiPPO). For higher-order kernels (e.g., polynomial of degree $p$, $k(a,b) = (a^Tb + c)^p $), apply the multinomial theorem and then we have $ k(a,b) = \sum_{|\alpha| \leq p} h a^\alpha b^\alpha $ so $D_x$ depends on components raised to powers up to $p$ (moments up to order $p$). Hence, if HiPPO captures these moments (i.e., $\rho \geq p$), then $c_x \approx c_{\bar{x}}$ implies $D_x \approx D_{\bar{x}}$. For the kernels out of polynomial, they can be approximated by a polynomial one with finite degree $p$, and we only need to guarantee that $\rho \geq p$, then we have $c_x \approx c_{\bar{x}}$ implies $D_x \approx D_{\bar{x}}$. 

To summarise, if appropriate orthogonal polynomials are selected (e.g., $\rho \geq p$), the closeness of HiPPO representations of two sequences of data would imply their closeness in kernel distance. 

\subsection{Details for the visualisation in Figure 2}
\label{sec:fig2}

In Fig. 2(a), the \textit{x\_seq} is 50 points from a \textit{sin()} function with an additional additive random noise from the standard normal distribution with coefficient 0.1, while the \textit{y\_seq} is 50 points from a \textit{sin()} function with an additive random noise from the standard normal distribution with coefficient 0.5. In Fig. 2(b), the \textit{x\_seq} is 50 points from a \textit{sin()} function with an additive random noise from the standard normal distribution with coefficient 0.1, while the \textit{y\_seq} is 50 points from a \textit{tanh()} function with an additive random noise from the standard normal distribution with coefficient 0.5. The correlation of data points in two panels at the right-hand side in each row is calculated using a polynomial kernel (with order=1), while the HiPPO is with order 5. More examples are given in Fig. \ref{fig:main2}.

\begin{figure}[h]
    \centering
    \begin{subfigure}[b]{\textwidth}
        \includegraphics[width=\linewidth]{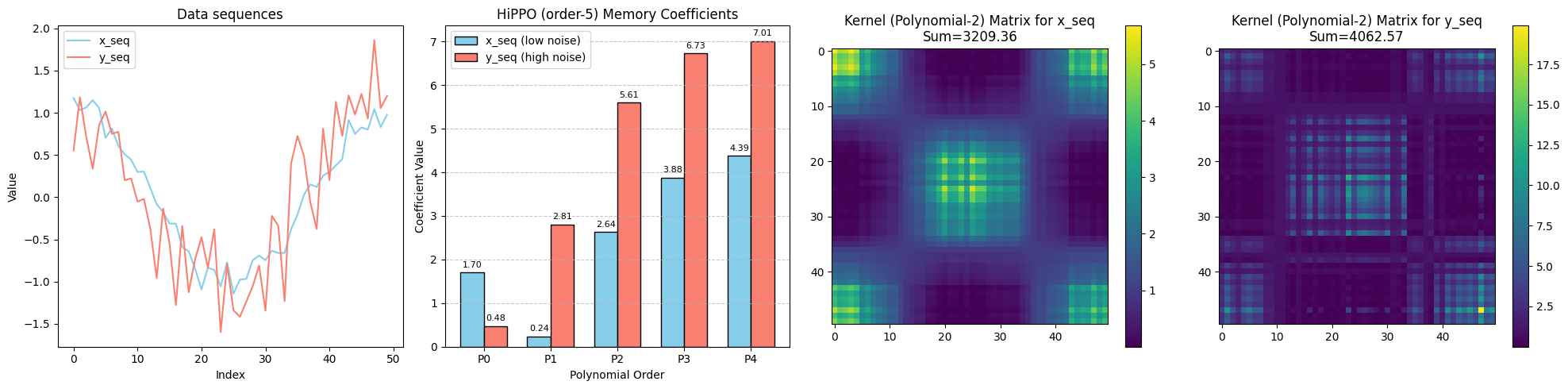}
    \caption{Both \textit{x\_seq} and \textit{y\_seq} from the cos function}
    \label{fig:cos0}
    \end{subfigure}
    \hfill 
    \begin{subfigure}[b]{\textwidth}
        \includegraphics[width=\linewidth]{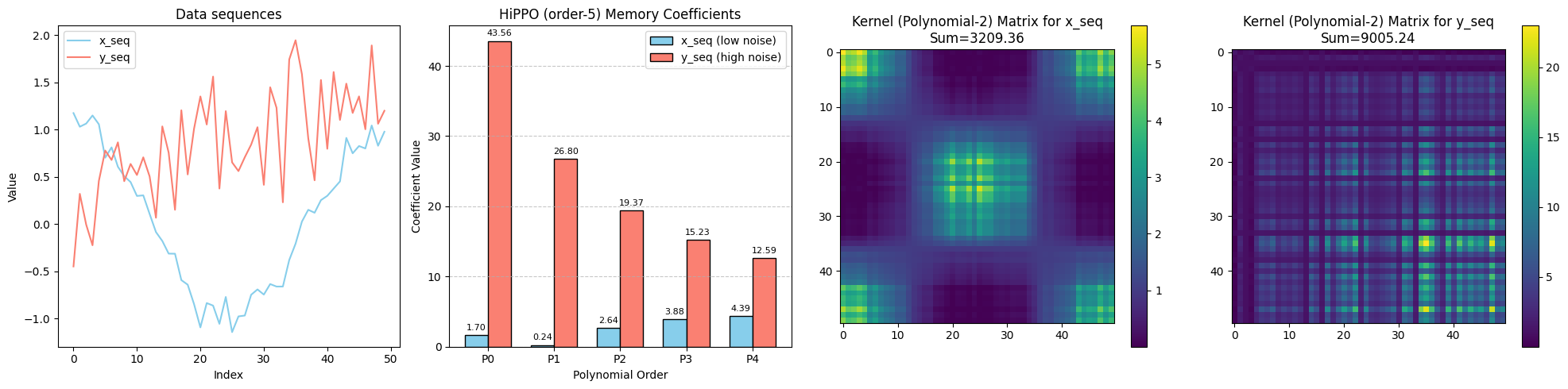}
    \caption{\textit{x\_seq} from the cos function, and \textit{y\_seq} from tanh function}
    \label{fig:cos2}
    \end{subfigure}
    \begin{subfigure}[b]{\textwidth}
        \includegraphics[width=\linewidth]{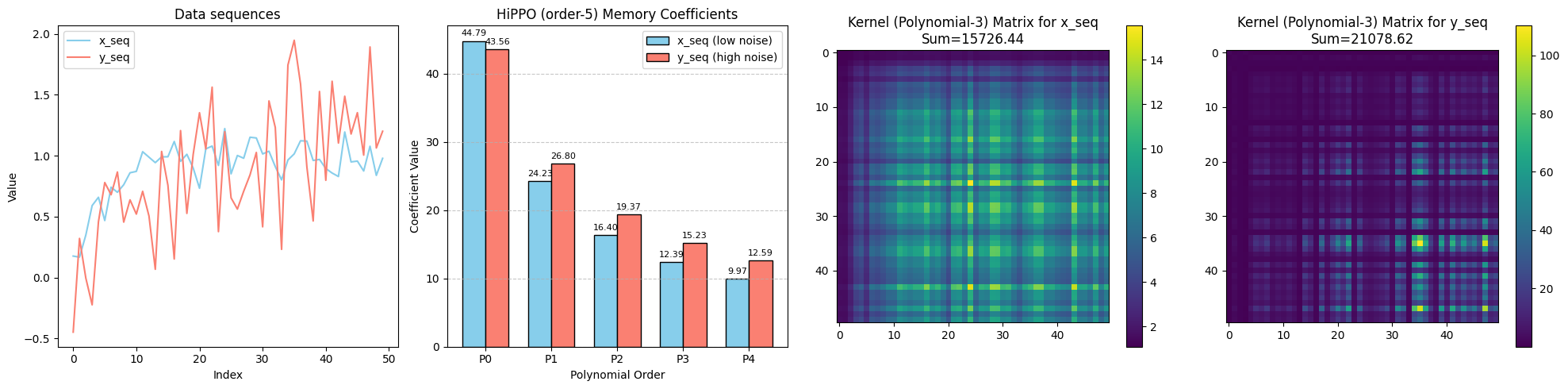}
    \caption{Both \textit{x\_seq} and \textit{y\_seq} from the tanh function}
    \label{fig:tanh0}
    \end{subfigure}
    \hfill 
    \begin{subfigure}[b]{\textwidth}
        \includegraphics[width=\linewidth]{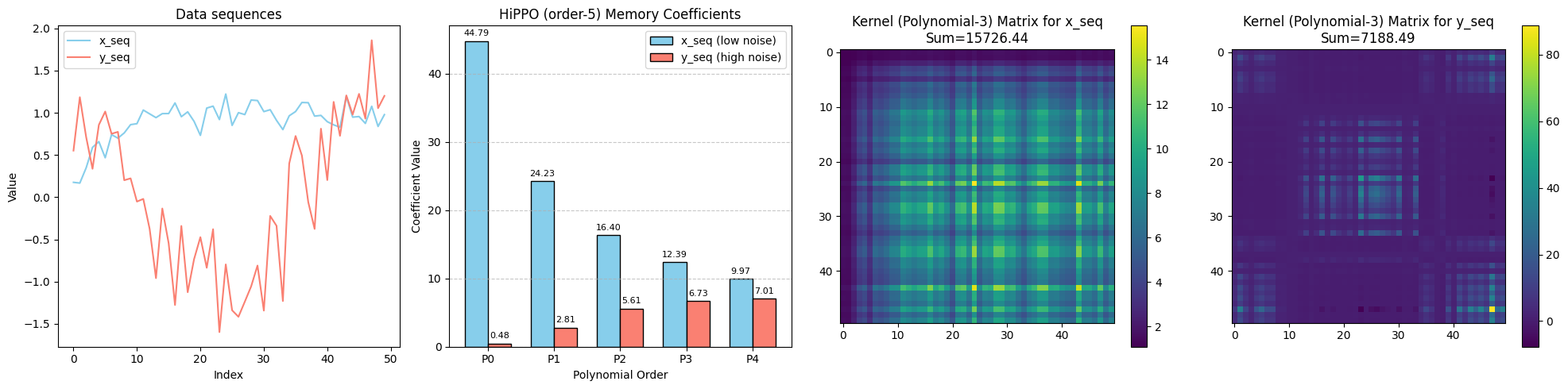}
    \caption{\textit{x\_seq} from the tanh function, and \textit{y\_seq} from cos function}
    \label{fig:tanh2}
    \end{subfigure}
    \caption{Empirical demonstration of the capability of HiPPO representation in expressing the correlation between data points. The left subfigure shows two sequences of data points (i.e., \textit{x\_seq} and \textit{y\_seq}) that have roughly similar correlations due to similar functional trends (the kernel matrix between data points is also visualised in the third and fourth subfigures). As shown in the second subfigure, when the shapes or trends of two sequences are similar, their corresponding HiPPO representations are close; in contrast, their distance increases when the sequences differ. }
    \label{fig:main2}
\end{figure}

\subsection{Experiment details}

For the REWEIGH, we use the direct weighting method (instead of using weighted sampling for batches), which means the weights are directly multiplied by the loss errors of the corresponding data points. The weights are rank-based. For the GPior, we only use the target function values to evaluate the GP prior kernel values without any additional information. 

\begin{table}[h]
\small
\centering
    \caption{Hyperparameters or setup for Sections 5.1 and 5.2}
    \label{tab:hyper512}
    \begin{tabular}{p{5cm}|p{8cm}}
    \hline
    \textbf{hyperparameter} & \textbf{value}\\
    \hline 
    VAE architecture & INPUT-FC(50)-FC(50)-FC(50)-FC(50)-FC(50)-(MU, SIGMA)\\
    seed & 123\\
    the epoch number for BO & 100\\
    the initial data samples used for the first VAE pretraining & 10\\
    the epoch number VAE update & 10\\
    the acquisition function & UpperConfidenceBound\\
    ode discrete method for HiPPO & `bilinear' \\
    HiPPO dimension & 50\\
    \hline 
    \end{tabular}
\end{table}

\begin{table}[h]
\small
\centering
    \caption{Hyperparameters or setup for Sections 5.3}
    \label{tab:hyper53}
    \begin{tabular}{p{5cm}|p{8cm}}
    \hline
    \textbf{hyperparameter} & \textbf{value}\\
    \hline 
    VAE encoder & CONV(1,4)-CONV(4,8)-CONV(8,8)-CONV(8,16)-{CONV(16,16) x 7}-FC(256-32)-FC(32,2*latent-dim)\\
    VAE decoder & FC(latent-dim,32)-FC(32, 256)-CONV(16,32)-{CONV(32,32) x 2}-CONV(32,16)-{CONV(16,16) x 3}-CONV(16-8)-{CONV(8-8) x 2}-CONV(8,1)\\
    seed & 1\\
    query budget & 500\\
    VAE update frequency & 5\\
    ode discrete method for HiPPO & `bilinear'\\
    HiPPO dimension & 50 \\
    \hline 
    \end{tabular}
\end{table}

\begin{table}[h]
\small
\centering
    \caption{Hyperparameters or setup for Sections 5.4}
    \label{tab:hyper54}
    \begin{tabular}{p{5cm}|p{8cm}}
    \hline
    \textbf{hyperparameter} & \textbf{value}\\
    \hline 
    VAE & JTVAE\\
    seed & 2\\
    query budget & 1000\\
    VAE update frequency & 50\\
    ode discrete method for HiPPO & `bilinear'\\
    HiPPO dimension & 50\\
    \hline 
    \end{tabular}
\end{table}

In the molecule generation experiment, we employ the Junction Tree Variational Autoencoder (JT-VAE), which extends traditional VAEs to molecular graphs through specialized encoders and decoders. The encoder learns two distinct latent representations: one captures the tree structure and high-level cluster information, while the other encodes fine-grained connectivity details.

Molecular graph generation proceeds in two stages. First, the model constructs a tree-structured scaffold composed of chemical substructures. Then, it assembles these substructures into a complete molecule. The molecule is represented by a latent vector z=[z\_T, z\_G], where z\_T encodes the tree structure and cluster-level information, and z\_G captures the detailed connectivity between clusters. The latent space has a total dimensionality of 56, with 28 dimensions each for z\_T and z\_G.

Decoding also occurs in two phases. The tree decoder reconstructs the junction tree, followed by the graph decoder, which predicts the fine-grained connections between clusters to form the full molecular graph. This component-wise generation approach avoids atom-by-atom assembly and prevents chemically invalid intermediates. Given the complexity of molecular structures and their limited computability, we utilize intermediate representations from JT-VAE: x\_tree\_vecs and x\_moles\_vecs, each with 450 dimensions. Together, they form the original data space with a total dimensionality of 900.

\end{document}